\newtheorem{definition}{Definition}
\newtheorem{corollary}{Corollary}
\newtheorem{lemma}{Lemma}
\newtheorem{proposition}{Proposition}
\def\bp{\mathbf{p}}
\def\bd{\mathbf{d}}
\DeclareMathOperator{\diag}{diag}
\DeclareMathOperator{\tr}{tr}
\DeclareMathOperator{\argmin}{argmin}
\DeclareMathOperator{\defeq}{\overset{\defi}{=}}
\providecommand{\norm}[1]{\lVert#1\rVert} 
\def\bd{\mathbf{d}}
\def\bx{\mathbf{x}}
\def\by{\mathbf{y}}
\def\defeq{\overset{\textup{\text{def}}}{=}}
\def\one{\mathds{1}} 
\newcommand{\dotprod}[2]{\ensuremath{\langle #1 , #2\,\rangle}}
\icmltitlerunning{Fast Computation of Wasserstein Barycenters}
\begin{document} 

\twocolumn[
\icmltitle{Fast Computation of Wasserstein Barycenters}

\icmlauthor{Marco Cuturi}{mcuturi@i.kyoto-u.ac.jp}
\icmladdress{Graduate School of Informatics,
Kyoto University}
\icmlauthor{Arnaud Doucet}{doucet@stat.oxford.ac.uk}
\icmladdress{Department of Statistics,
            University of Oxford}

\icmlkeywords{optimal transportation, Wasserstein barycenter, Matrix scaling, Sinkhorn-Knopp}

\vskip 0.3in
]
\sloppy

\begin{abstract}We present new algorithms to compute the mean of a set of empirical probability measures under the optimal transport metric. This mean, known as the Wasserstein barycenter, is the measure that minimizes the sum of its Wasserstein distances to each element in that set. We propose two original algorithms to compute Wasserstein barycenters that build upon the subgradient method. A direct implementation of these algorithms is, however, too costly because it would require the repeated resolution of large primal and dual optimal transport problems to compute subgradients. Extending the work of \citet{cuturi2013sinkhorn}, we propose to smooth the Wasserstein distance used in the definition of Wasserstein barycenters with an entropic regularizer and recover in doing so a strictly convex objective whose gradients can be computed for a considerably cheaper computational cost using matrix scaling algorithms. We use these algorithms to visualize a large family of images and to solve a constrained clustering problem.
\end{abstract}

\section{Introduction}

\begin{figure}[h!]
\centering
\includegraphics[width=7.7cm]{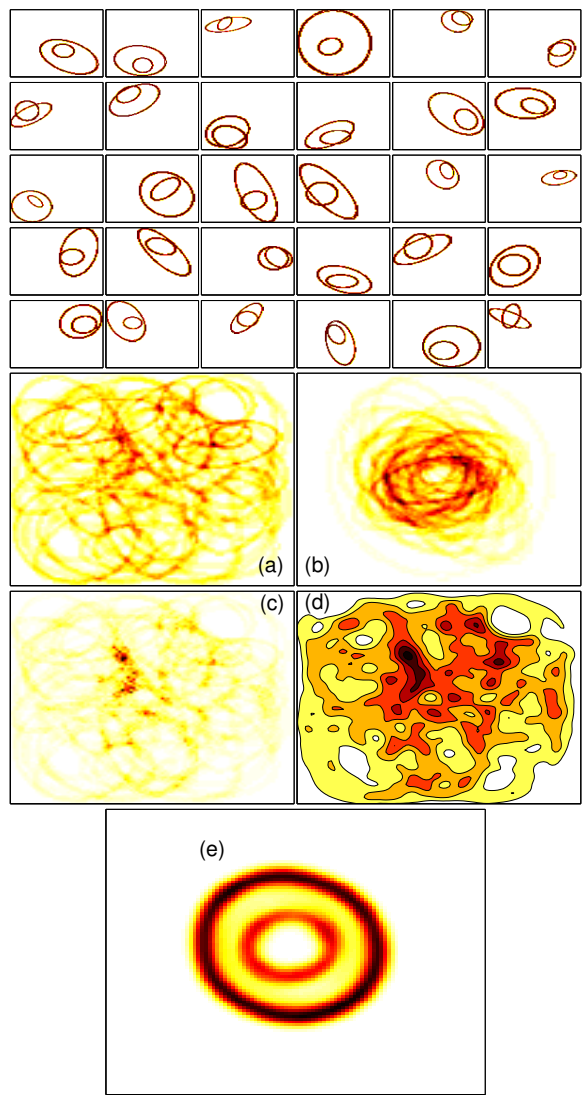}
\caption{(Top) 30 artificial images of two nested random ellipses. Mean measures using the (a) Euclidean distance (b) Euclidean after re-centering images (c) Jeffrey centroid \citep{JeffreysCentroid-2013} (d) RKHS distance (Gaussian kernel, $\sigma=0.002$) (e) 2-Wasserstein distance.}\label{fig:nines}\vskip-.5cm
\end{figure}

Comparing, summarizing and reducing the dimensionality of empirical probability measures defined on a space $\Omega$ are fundamental tasks in statistics and machine learning. Such tasks are usually carried out using pairwise comparisons of measures. Classic information divergences \citep{amar01b} are widely used to carry out such comparisons. 

Unless $\Omega$ is finite, these divergences cannot be directly applied to empirical measures, because they are ill-defined for measures that do not have continuous densities. They also fail to incorporate prior knowledge on the geometry of $\Omega$, which might be available if, for instance, $\Omega$ is also a Hilbert space. Both of these issues are usually solved using \citeauthor{parzen1962estimation}'s approach \citeyearpar{parzen1962estimation} to smooth empirical measures with smoothing kernels before computing divergences: the Euclidean \citep{gretton06} and $\chi_2$ distances \citep{harchaoui07}, the Kullback-Leibler and Pearson divergences \citep{kanamori2012,kanamori2012statistical} can all be computed fairly efficiently by considering matrices of kernel evaluations.

The choice of a divergence defines implicitly the \emph{mean} element, or barycenter, of a set of measures, as the particular measure that minimizes the sum of all its divergences to that set of target measures \citep{veldhuis2002centroid,banerjee2005clustering,teboulle2007unified,JeffreysCentroid-2013}. The goal of this paper is to compute efficiently barycenters (possibly in a constrained subset of all probability measures on $\Omega$) defined by the \emph{optimal transport distance} between measures \citep[\S6]{villani09}. We propose to minimize directly the sum of optimal transport distances from one measure (the variable) to a set of fixed measures by gradient descent. These gradients can be computed for a moderate cost by solving smoothed optimal transport problems as proposed by \citet{cuturi2013sinkhorn}.

%

Wasserstein distances have many favorable properties, documented both in theory \citep{villani09} and practice \citep{rubner1997earth,Pele-iccv2009}. We argue that their versatility extends to the barycenters they define.
We illustrate this intuition in Figure~\ref{fig:nines}, where we consider 30 images of nested ellipses on a $100\times 100$ grid. Each image is a discrete measure on $[0,1]^2$ with normalized intensities. Computing the Euclidean, Gaussian RKHS mean-maps or Jeffrey centroid of these images results in mean measures that hardly make any sense, whereas the 2-Wasserstein mean on that grid (defined in \S\ref{subsec:defbaryc}) produced by Algorithm~\ref{algo:discwass} captures perfectly the structure of these images. Note that these results were recovered without any prior knowledge on these images other than that of defining a distance in $[0,1]^2$, here the Euclidean distance. Note also that the Gaussian kernel smoothing approach uses the same distance, in addition to a bandwidth parameter $\sigma$ which needs to be tuned in practice.

This paper is organized as follows: we provide background on optimal transport in~\S\ref{sec:back}, followed by the definition of Wasserstein barycenters with motivating examples in~\S\ref{sec:baryc}. Novel contributions are presented from~\S\ref{sec:computing}: we present two subgradient methods to compute Wasserstein barycenters, one which applies when the support of the mean measure is known in advance and another when that support can be freely chosen in $\Omega$. These algorithms are very costly even for measures of small support or histograms of small size. We show in~\S\ref{sec:smooth} that the key ingredients of these approaches---the computation of primal and dual optimal transport solutions---can be bypassed by solving smoothed optimal transport problems. We conclude with two applications of our algorithms in \S\ref{sec:exp}.

\section{Background on Optimal Transport}\label{sec:back}
Let $\Omega$ be an arbitrary space, $D$ a metric on that space and $P(\Omega)$ the set of Borel probability measures on $\Omega$. For any point $x\in\Omega$, $\delta_{x}$ is the Dirac unit mass on $x$.
\begin{definition}[Wasserstein Distances]
For $p\in[1,\infty)$ and probability measures $\mu,\nu$ in $P(\Omega)$, their $p$-Wasserstein distance \citep[\S6]{villani09} is
$$
	W_p(\mu,\nu)\mathrm{\defeq}\left(\inf_{\pi\in\Pi(\mu,\nu)}\int_{\Omega^2} D(x,y)^p d\pi(x,y) \right)^{1/p},
$$
where $\Pi(\mu,\nu)$ is the set of all probability measures on $\Omega^2$ that have marginals $\mu$ and $\nu$.
\end{definition}
\subsection{Restriction to Empirical Measures}\label{subsec:finite}
We will only consider empirical measures throughout this paper, that is measures of the form $\mu=\sum_{i=1}^n a_i \delta_{x_i}$ where $n$ is an integer,  $X=(x_1,\dots,x_n)\in\Omega^n$ and $(a_1,\dots,a_n)$ lives in the probability simplex $\Sigma_n$,
 $$\Sigma_n\defeq \{u\in\mathbb{R}^n\;|\;\forall i\leq n , u_i\geq 0, \sum_{i=1}^n u_i=1\}.$$
Let us introduce additional notations:

\textbf{Measures on a Set $X$ with Constrained Weights.} 
Let $\Theta$ be a non-empty closed subset $\Theta$ of $\Sigma_n$. We write
$$
P(X,\Theta)\defeq\{\mu=\sum_{i=1}^n a_i \delta_{x_i},\; a\in\Theta\}.
$$
\textbf{Measures supported on up to $k$ points.} Given an integer $k$ and a subset $\Theta$ of $\Sigma_k$, we consider the set $P_k(\Omega,\Theta)$ of measures of $\Omega$ that have discrete support of size up to $k$ and weights in $\Theta$,
$$P_k(\Omega,\Theta)\defeq \bigcup_{X\in\Omega^k}P(X,\Theta).$$
When no constraints on the weights are considered, namely when the weights are free to be chosen anywhere on the probability simplex, we use the shorter notations $P(X)\defeq P(X,\Sigma_n)$ and $P_k(\Omega)\defeq P_k(\Omega,\Sigma_k)$. 

\subsection{Wasserstein \& Discrete Optimal Transport}\label{subsec:wdandot} 
Consider two families $X=(x_1,\dots,x_n)$ and $Y=(y_1,\dots,y_m)$ of points in $\Omega$. When $\mu=\sum_{i=1}^n a_i \delta_{x_i}$ and $\nu=\sum_{i=1}^n b_i \delta_{y_i}$, the Wasserstein distance $W_p(\mu,\nu)$ between $\mu$ and $\nu$ is the $p^\text{th}$ root of the optimum of a network flow problem known as the \emph{transportation problem} \citep[\S7.2]{bertsimas1997introduction}. This problem builds upon two elements: the \emph{\textbf{matrix}} $M_{XY}$ \emph{\textbf{of pairwise distances}} between elements of $X$ and $Y$ raised to the power $p$, which acts as a cost parameter,
\begin{equation}\label{eq:distMatrix}M_{XY}\defeq [D(x_i,y_j)^p]_{ij} \in\mathbb{R}^{n\times m},\end{equation} 
	and the \emph{\textbf{transportation polytope}} $U(a,b)$ of $a\in\Sigma_n$ and $b\in\Sigma_m$, which acts as a feasible set, defined as the set of $n\times m$ nonnegative matrices such that their row and column marginals are equal to $a$ and $b$ respectively. Writing $\one_n$ for the $n$-dimensional vector of ones,
\begin{equation}\label{eq:polytope} 
	U(a,b) \defeq \{ T\in\mathbb{R}_+^{n\times m}\; |\; T\one_m = a,\, T^T\one_n = b \}.
\end{equation}
	
Let $\dotprod{A}{B}\defeq \tr(A^T B)$ be the Frobenius dot-product of matrices.  Combining Eq.~\eqref{eq:distMatrix} \& \eqref{eq:polytope}, we have that $W_p^p(\mu,\nu)$---the distance $W_p(\mu,\nu)$ raised to the power $p$---can be written as the optimum of a parametric linear program $\bp$ on $n\times m$ variables, parameterized by the marginals $a,b$ and a (cost) matrix $M_{XY}$:
\begin{multline}\label{eq:primal}W_p^p(\mu,\nu) = \bp(a,b,M_{XY})\defeq\min_{T\in U(a,b)}\dotprod{T}{M_{XY}}.\end{multline}

\section{Wasserstein Barycenters}\label{sec:baryc}
We present in this section the Wasserstein barycenter problem, a variational problem involving all Wasserstein distances from one to many measures, and show how it encompasses known problems in clustering and approximation.
\subsection{Definition and Special Cases}\label{subsec:defbaryc}
\begin{definition}[\citeauthor{agueh2011barycenters}, \citeyear{agueh2011barycenters}] A Wasserstein barycenter of $N$ measures $\{\nu_1,\dots,\nu_N\}$ in $\mathbb{P}\subset P(\Omega)$ is a minimizer of $f$ over $\mathbb{P}$, where
\begin{equation}\label{eq:f}f(\mu)\defeq \frac{1}{N}\sum_{i=1}^N W_p^p(\mu,\nu_i).\end{equation}
\end{definition}

\citeauthor{agueh2011barycenters} consider more generally a non-negative weight $\lambda_i$ in front of each distance $W_p^p(\mu,\nu_i)$. The algorithms we propose extend trivially to that case but we use uniform weights in this work to keep notations simpler.


We highlight a few special cases where minimizing $f$ over a set $\mathbb{P}$ is either trivial, relevant to data analysis and/or has been considered in the literature with different tools or under a different name. In what follows $X\in\Omega^n$ and $Y\in\Omega^m$ are arbitrary finite subsets of $\Omega$.

\textbf{\textbullet\; $N=1, \mathbb{P}=P(X)$} When only one measure $\nu$, supported on $Y\in\Omega^m$ is considered, its closest element $\mu$ in $P(X)$---if no constraints on weights $a$ are given---can be computed by defining a weight vector $a$ on the elements of $X$ that results from assigning all of the mass $b_i$ to the closest neighbor in metric $D$ of $y_i$ in $X$.\\

\textbf{\textbullet\; Centroids of Histograms}: $N>1, \Omega$ finite, $\mathbb{P}=P(\Omega)$. When $\Omega$ is a set of size $d$ and a matrix $M\in\mathbb{R}^{d\times d}_+$ describes the pairwise distances between these $d$ points (usually called in that case bins or features), the $1$-Wasserstein distance is known as the Earth Mover's Distance (EMD) \citep{rubner1997earth}. In that context, Wasserstein barycenters have also been called EMD prototypes by \citet{zen2011earth}.

\textbf{\textbullet\; Euclidean  $\Omega$}: $N=1$, $D(x,y)=\norm{x-y}_2, p=2, \mathbb{P} = P_k(\Omega)$.
Minimizing $f$ on $P_k(\Omega)$ when $(\Omega,D)$ is a Euclidean metric space and $p=2$ is equivalent to the $k$-means problem \citep{pollard1982quantization,canas2012learning}.

\textbf{\textbullet\ Constrained $k$-Means}:  $N=1$,  $\mathbb{P} = P_k(\Omega,\{\one_k/k\})$. 
Consider a measure $\nu$ with support $Y\in\Omega^m$ and weights $b\in\Sigma_m$. 
The problem of approximating this measure by a uniform measure with $k$ atoms---a measure in $P_k(\Omega,\{\one_k/k\})$---in $2$-Wasserstein sense was to our knowledge first considered by \citet{ng2000note}, who proposed a variant of \citeauthor{lloyd1982least}'s algorithm \citeyearpar{lloyd1982least} for that purpose. More recently, \citet{reich2012} remarked that such an approximation can be used in the resampling step of particle filters and proposed in that context two ensemble methods inspired by optimal transport, one of which reduces to a single iteration of \citeauthor{ng2000note}'s algorithm. Such approximations can also be obtained with kernel-based approaches, by minimizing an information divergence between the (smoothed) target measure $\nu$ and its (smoothed) uniform approximation as proposed recently by \citet{CheWelSmo10} and \citet{sugiyama2011information}.


\subsection{Recent Work}\label{subsec:recentwork}
\citet{agueh2011barycenters} consider conditions on the $\nu_i$'s for a Wasserstein barycenter in $P(\Omega)$ to be unique using the multi-marginal transportation problem. They provide solutions in the cases where either (i) $\Omega=\mathbb{R}$; (ii) $N=2$ using \citeauthor{mccann1997convexity}'s interpolant \citeyearpar{mccann1997convexity}; (iii) all the measures $\nu_i$ are Gaussians in $\Omega=\mathbb{R}^d$, in which case the barycenter is a Gaussian with the mean of all means and a variance matrix which is the unique positive definite root of a matrix equation \citep[Eq.6.2]{agueh2011barycenters}.

\citet{rabin2012} were to our knowledge the first to consider practical approaches to compute Wasserstein barycenters between point clouds in $\mathbb{R}^d$. To do so, \citet{rabin2012} propose to approximate the Wasserstein distance between two point clouds by their \emph{sliced} Wasserstein distance, the expectation of the Wasserstein distance between the projections of these point clouds on lines sampled randomly. Because the optimal transport between two point clouds on the real line can be solved with a simple sort, the sliced Wasserstein barycenter can be computed very efficiently, using  gradient descent. Although their approach seems very effective in lower dimensions, it may not work for $d\geq 4$ and does not generalize to non-Euclidean metric spaces.

\section{New Computational Approaches}\label{sec:computing}
We propose in this section new approaches to compute Wasserstein barycenters when (i) each of the $N$ measures $\nu_i$ is an empirical measure, described by a list of atoms $Y_i\in\Omega^{m_i}$ of size $m_i\geq 1$, and a probability vector $b_i$ in the simplex $\Sigma_{m_i}$;  (ii) the search for a barycenter is not considered on the whole of $P(\Omega)$ but restricted to either $P(X,\Theta)$ (the set of measures supported on a predefined finite set $X$ of size $n$ with weights in a subset $\Theta$ of $\Sigma_{n}$) or $P_k(\Omega,\Theta)$ (the set of measures supported on up to $k$ atoms with weights in a subset $\Theta$ of $\Sigma_k$). 

Looking for a barycenter $\mu$ with atoms $X$ and weights $a$ is equivalent to minimizing $f$ (see Eq.~\ref{eq:primal} for a definition of $\bp$),
\begin{equation}\label{eq:f}f(a,X) \defeq \frac{1}{N} \sum_{i=1}^N \bp(a,b_i,M_{XY_i}),\end{equation}
over relevant feasible sets for $a$ and $X$. When $X$ is \emph{fixed}, we show in \S\ref{subsec:dualityconvexity} that $f$ is convex w.r.t $a$ regardless of the properties of $\Omega$. A subgradient for $f$ w.r.t $a$ can be recovered through the \emph{dual optimal solutions} of all problems $\bp(a,b_i,M_{XY_i})$, and $f$ can be minimized using a projected subgradient method outlined in \S\ref{subsec:Xrestricted}. If $X$ is \emph{free}, constrained to be of cardinal $k$, and $\Omega$ and its metric $D$ are both \emph{Euclidean}, we show in \S\ref{subsec:XEuclidean} that $f$ is not convex w.r.t $X$ but we can provide subgradients for $f$ using the \emph{primal optimal solutions} of all problems $\bp(a,b_i,M_{XY_i})$. This in turn suggests an algorithm to reach a local minimum for $f$ w.r.t. $a$ and $X$ in $P_k(\Omega,\Theta)$ by combining both approaches.

\subsection{Differentiability of $\bp(a,b,M_{XY})$ w.r.t $a$}\label{subsec:dualityconvexity}
\textbf{Dual transportation problem.} Given a matrix $M\in\mathbb{R}^{n\times m}$, the optimum $\bp(a,b,M)$ admits the following dual Linear Program (LP) form \citep[\S7.6,\S7.8]{bertsimas1997introduction}, known as the dual optimal transport problem:
\begin{equation}\label{eq:dual}\bd(a,b,M)=\max_{(\alpha,\beta)\in C_M} \alpha^Ta+\beta^Tb\,,\end{equation} 
where the polyhedron $C_M$ of dual variables is 
$$C_M=\{(\alpha,\beta)\in\mathbb{R}^{n+m} \,|\, \alpha_i+\beta_j\leq m_{ij}\}.$$
By LP duality, $\bd(a,b,M)=\bp(a,b,M)$. The dual optimal solutions---which can be easily recovered from the primal optimal solution \citep[Eq.7.10]{bertsimas1997introduction}---define a subgradient for $\bp$ as a function of $a$:
\begin{proposition}\label{prop:convex} Given $b\in\Sigma_m$ and $M\in\mathbb{R}^{n\times m}$, the map $a\mapsto \bp(a,b,M)$ is a polyhedral convex function. Any optimal dual vector $\alpha^\star$ of $\bd(a,b,M)$ is a subgradient of $\bp(a,b,M)$ with respect to $a$.
\end{proposition}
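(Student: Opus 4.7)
The plan is to derive both claims directly from LP duality, exploiting the crucial fact that the dual feasible set $C_M$ depends only on $M$ and not on $a$. By strong LP duality,
$$\bp(a,b,M)=\bd(a,b,M)=\sup_{(\alpha,\beta)\in C_M}\bigl(\alpha^T a+\beta^T b\bigr),$$
valid for every $a\in\Sigma_n$ (primal feasibility is witnessed by $ab^T\in U(a,b)$ and the objective is bounded below, so strong duality holds). For each fixed $(\alpha,\beta)$ the right-hand side is affine in $a$, so $a\mapsto\bp(a,b,M)$ is a pointwise supremum of affine functions and therefore convex.

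To upgrade this to polyhedral convexity I would reduce the supremum to a finite maximum. The polyhedron $C_M$ is unbounded: it contains the line $\{(t\one_n,-t\one_m):t\in\mathbb{R}\}$ because $\alpha_i+\beta_j$ is invariant under this shift, so $C_M$ has no vertices in the classical sense. However, since $\one_n^T a=\one_m^T b=1$, the objective is constant along this lineality direction, and one may restrict the supremum to any transverse slice (e.g.\ $\alpha_1=0$). The resulting slice is a line-free polyhedron with finitely many vertices $\{(\alpha_v,\beta_v)\}_v$, and a linear objective on a line-free polyhedron attains its maximum at one of them, giving
$$\bp(a,b,M)=\max_v\bigl(\alpha_v^T a+\beta_v^T b\bigr),$$
a maximum of finitely many affine functions, which is by definition polyhedral convex. (Equivalently one can invoke the standard parametric-LP fact that the optimal value of a linear program is a piecewise-affine convex function of the right-hand side of its equality constraints.)

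The subgradient claim then follows from a one-line envelope argument. Let $(\alpha^\star,\beta^\star)$ be dual-optimal at $a$. Because $C_M$ does not depend on $a$, the pair is still dual-feasible at any other $a'\in\Sigma_n$, and weak duality gives $\bp(a',b,M)\geq \alpha^{\star T}a'+\beta^{\star T}b$. Subtracting the equality $\bp(a,b,M)=\alpha^{\star T}a+\beta^{\star T}b$ yields
$$\bp(a',b,M)-\bp(a,b,M)\;\geq\;\alpha^{\star T}(a'-a),$$
which is exactly the defining inequality for $\alpha^\star$ to be a subgradient of $\bp(\cdot,b,M)$ at $a$.

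The only delicate step is the polyhedrality assertion: $C_M$ has a nontrivial lineality space and thus no vertices outright, so the reflex "max over finitely many vertices" is not directly available. The resolution is to notice that the offending lineality direction is annihilated by the objective precisely because $a$ and $b$ are probability vectors, so one can quotient it out and reduce to a finite-vertex representation. Convexity and the subgradient formula, by contrast, come essentially for free from the dual feasible set being independent of $a$.
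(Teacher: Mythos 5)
Your proof is correct and follows essentially the same route as the paper, which also argues via LP duality/sensitivity analysis that $\bp(a,b,M)=\bd(a,b,M)$ is a maximum of finitely many linear functions of $a$ and that a dual optimizer is a subgradient. In fact you supply a detail the paper glosses over: its appeal to ``extreme points of $C_M$'' is strictly vacuous because of the lineality direction $(t\one_n,-t\one_m)$, and your observation that the objective is constant along this line (since $a$ and $b$ are probability vectors), so one may pass to a line-free slice with genuine vertices, is exactly the right fix.
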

\begin{proof} These results follow from sensitivity analysis in LP's \citep[\S5.2]{bertsimas1997introduction}. $\bd$ is bounded and is also the maximum of a finite set of linear functions, each indexed by the set of extreme points of $C_M$, evaluated at $a$ and is therefore polyhedral convex. When the dual optimal vector is unique, $\alpha^\star$ is a gradient of $\bp$ at $a$, and a subgradient otherwise.\end{proof}
Because for any real value $t$ the pair $(\alpha+t\one_n,\beta-t\one_m)$ is feasible if the pair $(\alpha,\beta)$ is feasible, and because their objective are identical, any dual optimum $(\alpha,\beta)$ is determined up to an additive constant. To remove this degree of freedom---which arises from the fact that one among all $n+m$ row/column sum constraints of $U(a,b)$ is redundant---we can either remove a dual variable or normalize any dual optimum $\alpha^\star$ so that it sums to zero, to enforce that it belongs to the tangent space of $\Sigma_n$. We follow the latter strategy in the rest of the paper.

\subsection{Fixed Support: Minimizing $f$ over $P(X)$}\label{subsec:Xrestricted}
Let $X\subset\Omega^n$ be fixed and let $\Theta$ be a closed convex subset of $\Sigma_n$. The aim of this section is to compute weights $a\in\Theta$ such that $f(a,X)$ is minimal. Let $\alpha_i^\star$ be the optimal dual variable of $\bd(a,b_i;M_{XY_i})$ normalized to sum to 0. $f$ being a sum of terms $\bp(a,b_i,M_{XY_i})$, we have that:

\begin{corollary} The function  $a\mapsto f(a,X)$ is polyhedral convex, with subgradient $$\pmb{\alpha}\defeq \frac{1}{N}\sum_{i=1}^N \alpha_i^\star.$$\end{corollary}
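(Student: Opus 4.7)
The plan is to derive both claims directly from Proposition~\ref{prop:convex} applied term by term, since $f(a,X)$ is by definition the average
\[
f(a,X) = \frac{1}{N}\sum_{i=1}^N \bp(a,b_i,M_{XY_i}),
\]
and each summand has already been analyzed as a function of $a$ (with $b_i$, $M_{XY_i}$ fixed once $X$ is fixed).

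First I would invoke Proposition~\ref{prop:convex} on each term to conclude that $a \mapsto \bp(a,b_i,M_{XY_i})$ is polyhedral convex. The class of polyhedral convex functions is closed under nonnegative scalar multiplication and under finite sums --- a sum of maxima of finitely many affine functions is again a maximum of finitely many affine functions, obtained by taking all componentwise sums of the selectors. Hence $f(\cdot,X)$, being $\tfrac{1}{N}$ times a sum of $N$ such functions, is itself polyhedral convex in $a$.

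Next I would handle the subgradient claim using the Moreau--Rockafellar additivity rule for subdifferentials: for proper convex functions $g_1,\dots,g_N$ with overlapping relative interiors of their domains (trivially satisfied here since each $\bp(\cdot,b_i,M_{XY_i})$ is finite on all of $\Sigma_n$), one has $\partial(\sum g_i) = \sum \partial g_i$. Proposition~\ref{prop:convex} supplies $\alpha_i^\star \in \partial_a \bp(a,b_i,M_{XY_i})$, so averaging yields $\tfrac{1}{N}\sum_{i=1}^N \alpha_i^\star \in \partial_a f(a,X)$, which is exactly $\pmb{\alpha}$.

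There is really no obstacle of substance: the statement is a direct application of Proposition~\ref{prop:convex} together with the standard calculus of polyhedral convex functions and their subdifferentials. The only minor point worth flagging in the write-up is the normalization convention --- each $\alpha_i^\star$ is taken to sum to zero, so their average $\pmb{\alpha}$ also lies in the tangent space of $\Sigma_n$, which is the natural ambient space for subgradients to be used in a projected subgradient method on $\Theta \subseteq \Sigma_n$.
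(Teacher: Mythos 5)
Your proposal is correct and follows essentially the same route as the paper, which states the corollary as an immediate consequence of Proposition~\ref{prop:convex} by summing the $N$ polyhedral convex terms and their subgradients. The only remark is that invoking the Moreau--Rockafellar equality $\partial(\sum g_i)=\sum\partial g_i$ is more than needed: the easy inclusion $\frac{1}{N}\sum_i\alpha_i^\star\in\partial_a f(a,X)$ holds directly from the definition of a subgradient, with no qualification condition required.
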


Assuming $\Theta$ is closed and convex, we can consider a naive projected subgradient minimization of $f$. Alternatively, if there exists a Bregman divergence $B(a,b)=\omega(b)-\omega(a)-\dotprod{\nabla\omega(a)}{b-a}$
for $a,b\in\Theta$ defined by a prox-function $\omega$, we can define the proximal mapping $P_{a}(b)=\argmin_{c\in\Theta} \left(\dotprod{b}{c-a}+B(a,c)\right)$ and consider accelerated gradient approaches \citep{Nesterov2005}. We summarize this idea in Algorithm~\ref{algo:discwass}.

\begin{algorithm}
	\begin{algorithmic}
		\caption{Wasserstein Barycenter in $P(X,\Theta)$\label{algo:discwass}} 
		\STATE \textbf{Inputs}: $X\in\Omega^n, \Theta\subset\Sigma_n$. For $i\leq N: Y_i\in\Omega^{m_i}, b_i\in\Sigma_{m_i}, p\in [1,\infty), t_0>0$.
		\STATE Form all $n\times m_i$ matrices $M_i=M_{XY_i}$, see Eq.~\eqref{eq:distMatrix}.
		\STATE Set $\hat{a}=\tilde{a}=\argmin_\Theta \omega$.
		\WHILE{not converged}
		\STATE $\beta=(t+1)/2$, $a\leftarrow (1-\beta^{-1}) \hat{a} + \beta^{-1}\tilde{a}$.
		\STATE Form subgradient $\pmb{\alpha}\leftarrow N^{-1}\sum_{i=1}^N \alpha_i^\star$ using all dual optima $\alpha_i^\star$ of $\bd(a,b_i,M_i)$.
		\STATE $\tilde{a}\leftarrow P_{a}(t_0\beta \pmb{\alpha})$.
		\STATE $\hat{a}\leftarrow (1-\beta^{-1}) \hat{a} + \beta^{-1}\tilde{a}, \,t\leftarrow t+1$.
		\ENDWHILE
	\end{algorithmic}
\end{algorithm}
Notice that when $\Theta=\Sigma_n$ and $B$ is the Kullback-Leibler divergence \citep{beck2003mirror}, we can initialize $\tilde{a}$ with $\one_n/n$ and use the multiplicative update to realize the proximal update: $\tilde{a}\leftarrow \tilde{a}\circ e^{-t_0\beta \pmb{\alpha}}; \tilde{a}\leftarrow \tilde{a}/\tilde{a}^T\one_n$, where $\circ$ is Schur's product. Alternative sets $\Theta$ for which  this projection can be easily carried out include, for instance, all (convex) level set of the entropy function $H$, namely $\Theta=\{a\in\Sigma_n | H(a)\geq \tau\}$ where $0\leq \tau\leq \log n$. 

\subsection{Differentiability of $\bp(a,b,M_{XY})$ w.r.t $X$}\label{subsec:XEuclidean}
We consider now the case where $\Omega=\mathbb{R}^d$ with $d\geq 1$, $D$ is the Euclidean distance and $p=2$. When $\Omega=\mathbb{R}^d$, a family of $n$ points $X$ and a family of $m$ points $Y$ can be  represented respectively as a matrix in $\mathbb{R}^{d\times n}$ and another in $\mathbb{R}^{d\times m}$. The pairwise squared-Euclidean distances between points in these sets can be recovered by writing $\bx\defeq\diag(X^TX)$ and $\by\defeq\diag(Y^TY)$, and observing that
$$
M_{XY}=\bx\one_m^T+\one_n\by^T - 2X^TY\in\mathbb{R}^{n\times m}.
$$
\textbf{Transport Cost as a function of $X$.}
Due to the margin constraints that apply if a matrix $T$ is in the polytope $U(a,b)$, we have:
$$
\begin{aligned}
	\dotprod{T}{M_{XY}} &=\dotprod{T}{\bx\one_d^T+\one_d^T\by - 2X^TY}\\&=\tr T^T\bx\one_d^T+\tr T^T\one_d^T\by - 2\dotprod{T}{X^TY}\\
	 &= \bx^Ta+\by^Tb - 2\dotprod{T}{X^TY}.
\end{aligned}
$$

Discarding constant terms in $\by$ and $b$, we have that minimizing $\bp(a,b,M_{XY})$ with respect to locations $X$ is equivalent to solving
\begin{equation}\label{eq:minmax}
\min_{\substack{X\in\mathbb{R}^{d\times k}}} \bx^Ta+\bp(a,b,-X^TY).
\end{equation}
As a function of $X$, that objective is the sum of a convex quadratic function of $X$ with a piecewise linear concave function, since 
$$\bp(a,b,-X^TY) = \min_{T\in U(a,b)}\dotprod{X}{-YT^T}$$
is the minimum of linear functions indexed by the vertices of the polytope $U(a,b)$. As a consequence, $\bp(a,b,M_{XY})$ is not convex with respect to $X$.\\

\textbf{Quadratic Approximation.} 
Suppose that $T^\star$ is optimal for problem $\bp(a,b,M_{XY})$. Updating Eq.~\eqref{eq:minmax},
\begin{multline*}\hskip-.4cm\bx^Ta- \dotprod{T^\star}{X^TY}=\norm{X\diag(a^{1/2})- YT^{\star T} \diag(a^{-1/2})}^2\\-\norm{YT^{\star T} \diag(a^{-1/2})}^2.\end{multline*}
Minimizing a local quadratic approximation of $\bp$ at $X$ yields thus the Newton update 
\begin{equation}\label{eq:newton}X\leftarrow YT^{\star T}\diag(a^{-1}).\end{equation} 
	A simple interpretation of this update is as follows: the matrix $T^{\star T}\diag(a^{-1})$ has $n$ column-vectors in the simplex $\Sigma_{m}$. The suggested update for $X$ is to replace it by $n$ barycenters of points enumerated in $Y$ with weights defined by the optimal transport $T^\star$. Note that, because the minimization problem we consider in $X$ is not convex to start with, one could be fairly creative when it comes to choosing $D$ and $p$ among other distances and exponents. This substitution would only involve more complicated gradients of $M_{XY}$ w.r.t. $X$ that would appear in Eq.~\eqref{eq:minmax}. 

\subsection{Free Support: Minimizing $f$ over $P_k(\mathbb{R}^d,\Theta)$}\label{subsec:XEuclidean}
We now consider, as a natural extension of \S\ref{subsec:Xrestricted} when $\Omega=\mathbb{R}^d$, the problem of minimizing $f$ over a probability measure $\mu$ that is (i) supported by \emph{at most $k$ atoms} described in $X$, a matrix of size $d\times k$, (ii) with weights in $a\in\Theta\subset\Sigma_k$.

\textbf{Alternating Optimization.} To obtain an approximate minimizer of $f(a,X)$ we propose in Algorithm~\ref{algo:general} to update alternatively locations $X$ (with the Newton step defined in Eq.~\ref{eq:newton}) and weights $a$ (with Algorithm~\ref{algo:discwass}). 

\begin{algorithm}
	\begin{algorithmic}
		\caption{$2$-Wasserstein Barycenter in $P_k(\mathbb{R}^d,\Theta)$\label{algo:general}}
		\STATE \textbf{Input}: $Y_i\in\mathbb{R}^{d\times m_i}, b_i\in\Sigma_{m_i}$ for $i\leq N.$
		\STATE initialize $X\in\mathbb{R}^{d\times k}$ and $a\in\Theta$
		\WHILE{$X$ and $a$ have not converged}
				\STATE $a\leftarrow a^\star$ using Algorithm~\ref{algo:discwass}.
				\FOR {$i\in (1,\dots,N)$} 
				\STATE $T_i^\star \leftarrow$ optimal solution of $\bp(a,b_i;M_{XY_i})$
				\ENDFOR
				\STATE $X \leftarrow (1-\theta)X + \theta \left(\frac{1}{N}\sum_{i=1}^N Y_iT_i^{\star T}\right)\diag(a^{-1})$, setting $\theta\in[0,1]$ with line-search or a preset value.
		\ENDWHILE
	\end{algorithmic}
\end{algorithm}

\textbf{Algorithm~\ref{algo:general} and \citeauthor{lloyd1982least}/\citeauthor{ng2000note} Algorithms.} As mentioned in \S\ref{sec:back}, minimizing $f$ defined in Eq.~\eqref{eq:f} over $P_k(\mathbb{R}^d)$, with $N=1$, $p=2$ and no constraints on the weights ($\Theta=\Sigma_k$), is equivalent to solving the $k$-means problem applied to the set of points enumerated in $\nu_1$. In that particular case, Algorithm~\ref{algo:general} is also equivalent to \citeauthor{lloyd1982least}'s algorithm. Indeed, the assignment of the weight of each point to its closest centroid in \citeauthor{lloyd1982least}'s algorithm (the maximization step) is equivalent to the computation of $a^\star$ in ours, whereas the re-centering step (the expectation step) is equivalent to our update for $X$ using the optimal transport, which is in that case the trivial transport that assigns the weight (divided by $N$) of each atom in $Y_i$ to its closest neighbor in $X$. When the weight vector $a$ is constrained to be uniform ($\Theta=\{\one_k/k\}$),
\citet{ng2000note} proposed a heuristic to obtain uniform $k$-means that is also equivalent to Algorithm~\ref{algo:general}, and which also relies on the repeated computation of optimal transports. For more general sets $\Theta$, Algorithm~\ref{algo:discwass} ensures that the weights $a$ remain in $\Theta$ at each iteration of Algorithm~\ref{algo:general}, which cannot be guaranteed by neither \citeauthor{lloyd1982least}'s nor \citeauthor{ng2000note}'s approach. 

\textbf{Algorithm~\ref{algo:general} and \citeauthor{reich2012}'s \citeyearpar{reich2012} Transform.} \citet{reich2012} has recently suggested to approximate a weighted measure $\nu$ by a uniform measure supported on as many atoms. This approximation is motivated by optimal transport theory, notably asymptotic results by \citet{mccann1995existence}, but does not attempt to minimize, as we do in Algorithm~\ref{algo:general}, any Wasserstein distance between that approximation and the original measure. This approach results in one application of the Newton update defined in Eq.~\eqref{eq:newton}, when $X$ is first initialized to $Y$ and $a=\one_m/m$ to compute the optimal transport $T^\star$.

\textbf{Summary} We have proposed two original algorithms to compute Wasserstein barycenters of probability measures: one which applies when the support of the barycenter is fixed and its weights are constrained to lie in a convex subset $\Theta$ of the simplex, another which can be used when the support can be chosen freely. These algorithms are relatively simple, yet---to the best of our knowledge---novel. We suspect these approaches were not considered before because of their prohibitive computational cost: Algorithm~\ref{algo:discwass} computes at each iteration the dual optima of $N$ transportation problems to form a subgradient, each with $n+m_i$ variables and $n\times m_i$ inequality constraints. Algorithm~\ref{algo:general} incurs an even higher cost, since it involves running Algorithm~\ref{algo:discwass} at each iteration, in addition to solving $N$ primal optimal transport problems to form a subgradient to update $X$. Since both objectives rely on subgradient descent schemes, they are also likely to suffer from a very slow convergence. We propose to solve these issues by following \citeauthor{cuturi2013sinkhorn}'s approach \citeyearpar{cuturi2013sinkhorn} to smooth the objective $f$ and obtain strictly convex objectives whose gradients can be computed more efficiently.

\section{Smoothed Dual and Primal Problems}\label{sec:smooth}
To circumvent the major computational roadblock posed by the repeated computation of primal and dual optimal transports, we extend \citeauthor{cuturi2013sinkhorn}'s approach \citeyearpar{cuturi2013sinkhorn} to obtain smooth and strictly convex approximations of both primal and dual problems $\bp$ and $\bd$. The matrix scaling approach advocated by \citeauthor{cuturi2013sinkhorn} was motivated by the fact that it provided a fast approximation $\bp_\lambda$ to $\bp$. We show here that the same approach can be used to smooth the objective $f$ and recover for a cheap computational price its gradients w.r.t. $a$ and $X$.

\subsection{Regularized Primal and Smoothed Dual}
A $n\times m$ transport $T$, which is by definition in the $nm$-simplex, has entropy
$h(T)\defeq -\sum_{i,j=1}^{n,m} t_{ij}\log(t_{ij}).$
\citet{cuturi2013sinkhorn} has recently proposed to consider, for $\lambda>0$, a regularized primal transport problem $\bp_\lambda$ as
$$
\bp_\lambda(a,b;M) = \min_{T\in U(a,b)} \dotprod{X}{M} -\frac{1}{\lambda}h(T).
$$

We introduce in this work its dual problem, which is a smoothed version of the original dual transportation problem, where the positivity constraints of each term $m_{ij}-\alpha_i-\beta_j$ have been replaced by penalties $\frac{1}{\lambda}e^{-\lambda \left(m_{ij}-\alpha_i-\beta_j\right)}$:
$$
\bd_\lambda(a,b;M) =\!\!\!\!\!\! \max_{(\alpha,\beta)\in \mathbb{R}^{n+m}} \!\!\!\!\!\!\alpha^Ta+\beta^Tb -\!\!\!\!\!\!\sum_{i\leq n,j \leq m}\!\!\!\!\!\!\frac{e^{-\lambda \left(m_{ij}-\alpha_i-\beta_j\right)}}{\lambda}.
$$

These two problems are related below in the sense that their respective optimal solutions are linked by a unique positive vector $u\in\mathbb{R}_+^{n}$:
\begin{proposition}\label{prop:primdual}Let $K$ be the elementwise exponential of $-\lambda M_{XY}$, $K\defeq e^{-\lambda M_{XY}}$. Then there exists a pair of vectors $(u,v)\in\mathbb{R}_+^{n}\times\mathbb{R}_+^{m}$ such that the optimal solutions of $\bp_\lambda$ and $\bd_\lambda$ are respectively given by
	$$T_\lambda^\star = \diag(u)K\diag(v), \alpha_\lambda^\star = -\frac{\log(u)}{\lambda}+\frac{\log(u)^T\one_n}{\lambda n}\one_n.$$
\end{proposition}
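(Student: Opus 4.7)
The plan is to derive the claim from the first-order optimality conditions of the two problems and then invoke the Sinkhorn--Knopp matrix scaling theorem to identify the scaling vectors on both sides.

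\textbf{Primal KKT.} The map $T \mapsto \langle T,M_{XY}\rangle - \tfrac{1}{\lambda}h(T)$ is strictly convex, and the entropy acts as a log-barrier forcing any minimizer of $\bp_\lambda$ to have $T_{ij}>0$, so the only active constraints are the affine marginals. Introducing multipliers $\alpha\in\mathbb{R}^n$, $\beta\in\mathbb{R}^m$ and setting $\partial L/\partial T_{ij}=0$ gives $T_{ij}^\star = e^{\lambda(\alpha_i+\beta_j)-1}K_{ij}$; absorbing the constant $e^{-1}$ into the vectors $u_i=e^{\lambda\alpha_i-1/2}$, $v_j=e^{\lambda\beta_j-1/2}$ yields the product form $T_\lambda^\star = \diag(u)K\diag(v)$ with $(u,v)\in\mathbb{R}_+^n\times\mathbb{R}_+^m$.

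\textbf{Dual stationarity.} The smoothed dual objective is strictly concave in $(\alpha,\beta)$ modulo the translation $(\alpha,\beta)\mapsto(\alpha+t\one_n,\beta-t\one_m)$, so an optimum exists and is unique up to that shift. Differentiating in $\alpha_i$ yields $a_i=\sum_j e^{-\lambda(m_{ij}-\alpha_i^\star-\beta_j^\star)}$ and analogously in $\beta_j$; with $\tilde u_i=e^{\lambda\alpha_i^\star}$, $\tilde v_j=e^{\lambda\beta_j^\star}$ these are precisely the Sinkhorn scaling equations $\diag(\tilde u)K\diag(\tilde v)\in U(a,b)$.

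\textbf{Identification.} By Sinkhorn--Knopp, for $K>0$ and marginals $a,b$ of full support there is a pair $(u,v)$, unique up to $(u,v)\mapsto(cu,v/c)$, with $\diag(u)K\diag(v)\in U(a,b)$. Both the primal KKT and the dual stationarity produce such a scaled matrix, hence they must agree on a common representative, giving the claimed product form of $T_\lambda^\star$ and forcing $\alpha_\lambda^\star$ to equal $\log(u)/\lambda$ up to an additive constant $c\one_n/\lambda$ in which the $e^{-1/2}$ from Step~1, the translation symmetry in the dual, and the scaling ambiguity all collapse. Adopting the convention of \S\ref{subsec:dualityconvexity} and orthogonally projecting onto $\{\alpha:\alpha^T\one_n=0\}$ removes this constant and yields the stated formula. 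The main obstacle is bookkeeping the three degrees of freedom and verifying that they merge into the single constant that centering kills; once Sinkhorn--Knopp is in hand, the proposition is essentially a change of variables between Lagrange multipliers and matrix-scaling factors.
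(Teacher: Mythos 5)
Your overall route coincides with the paper's (which compresses the whole argument into one sentence): Lagrange conditions for the entropic primal, first-order conditions for the unconstrained smoothed dual, and Sinkhorn--Knopp uniqueness to identify the two scalings. Steps 1 and 2 are carried out correctly: the primal stationarity gives $u_i=e^{\lambda\alpha_i-1/2}$ and the dual stationarity gives $\tilde u_i=e^{\lambda\alpha_i^\star}$, so that $\diag(\tilde u)K\diag(\tilde v)\in U(a,b)$.

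The gap is in your identification step. From your own equations, $\alpha^\star_\lambda=\tfrac{1}{\lambda}\log u+c\one_n$ for some scalar $c$, and after projecting onto $\{\alpha:\alpha^T\one_n=0\}$ this is $\tfrac{1}{\lambda}\log(u)-\tfrac{\log(u)^T\one_n}{\lambda n}\one_n$ --- the \emph{negative} of the formula displayed in the proposition. The three ambiguities you bookkeep (the $e^{-1}$ constant, the translation $(\alpha,\beta)\mapsto(\alpha+t\one_n,\beta-t\one_m)$, and the Sinkhorn scaling $(u,v)\mapsto(cu,v/c)$) each contribute only a multiple of $\one_n$; centering annihilates all of them, but none can flip the sign of the non-constant part of $\log u$. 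So the assertion that they ``collapse'' into the stated formula is a non sequitur: a sanity check with $n=m=1$, where $uv=e^{\lambda m_{11}}$ and the dual optimum must satisfy $\alpha+\beta=m_{11}=\tfrac{1}{\lambda}\log(uv)$, confirms the plus sign. The minus sign in the proposition is consistent instead with the multiplier convention of \citet[Lemma 2]{cuturi2013sinkhorn}, where the Lagrangian is written with $+\alpha^T(T\one_m-a)$ so that $u=e^{-1/2-\lambda\alpha}$; that $\alpha$ is the negative of the maximizer of $\bd_\lambda$ as defined here. A correct write-up must either derive the statement under that convention and say so explicitly, or note that for $\bd_\lambda$ as literally written the optimal $\alpha^\star_\lambda$ is $+\tfrac{1}{\lambda}\log(u)$ after centering. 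As it stands, your last sentence claims a conclusion your computation contradicts.
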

\begin{proof} The result follows from the Lagrange method of multipliers for the primal as shown by \citet[Lemma 2]{cuturi2013sinkhorn}, and a direct application of first-order conditions for the dual, which is an unconstrained convex problem. The term $\frac{\log(u)^T\one_n}{\lambda n}\one_n$ in the definition of $\alpha_\lambda^\star$ is used to normalize $\alpha_\lambda^\star$ so that it sums to zero as discussed in the end of \S\ref{subsec:dualityconvexity}.\end{proof}

\subsection{Matrix Scaling Computation of $(u,v)$}

The positive vectors $(u,v)$ mentioned in Proposition~\ref{prop:primdual} can be computed through \citeauthor{sinkhorn1967diagonal}'s matrix scaling algorithm applied to $K$, as outlined in Algorithm~\ref{algo:sk}:

	\begin{lemma}[\citeauthor{sinkhorn1967diagonal}, \citeyear{sinkhorn1967diagonal}]\label{theo:SK} For any positive matrix $A$ in $\mathbb{R}_+^{n\times m}$ and positive probability vectors $a\in\Sigma_n$ and $b\in\Sigma_m$, there exist positive vectors $u\in \mathbb{R}^n_+$ and $v\in\mathbb{R}^m_+$, unique up to scalar multiplication, such that $\diag(u)A\diag(v)\in U(a,b)$. Such a pair $(u,v)$ can be recovered as a fixed point of the Sinkhorn map
		 $$(u,v) \mapsto (Av^{-1}./b,A^Tu^{-1}./a).$$
	\end{lemma}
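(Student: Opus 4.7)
The plan is to establish three things in turn: existence of a pair $(u,v)$, its uniqueness up to scalar multiplication, and that such a pair can actually be obtained as a fixed point of the Sinkhorn map. The unifying thread will be the variational principle that scaling $A$ entrywise by row and column factors into $U(a,b)$ is precisely the KKT condition for the KL (I-)projection of $A$ onto $U(a,b)$.

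For existence, I would consider the strictly convex program $\min_{P\in U(a,b)}\KL(P\|A)$, where $\KL(P\|A)\defeq\sum_{ij}P_{ij}\log(P_{ij}/A_{ij})-P_{ij}+A_{ij}$ treats the positive matrix $A$ as a reference measure. Since $a,b$ are strictly positive and $A$ is entrywise positive, $U(a,b)$ is nonempty (the rank-one matrix $ab^T$ lies in it) and the objective is continuous, strictly convex, and coercive on the feasible polytope, so a unique minimizer $P^\star$ exists and is entrywise positive. Introducing multipliers $\alpha\in\mathbb{R}^n$ and $\beta\in\mathbb{R}^m$ for the two marginal constraints and differentiating the Lagrangian gives the stationarity relation $\log(P^\star_{ij}/A_{ij})=\alpha_i+\beta_j$, equivalently $P^\star_{ij}=u_iA_{ij}v_j$ with $u_i=e^{\alpha_i}>0$ and $v_j=e^{\beta_j}>0$. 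By construction this matrix lies in $U(a,b)$.

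For uniqueness up to scaling, suppose two positive pairs $(u,v)$ and $(u',v')$ both yield matrices in $U(a,b)$. Each such matrix satisfies the KKT conditions of the KL-projection problem above, because its elementwise log-ratio to $A$ decomposes as a row function plus a column function, which is precisely the normal cone of $U(a,b)$. Strict convexity of $\KL$ then forces both matrices to equal $P^\star$, so $u_iA_{ij}v_j=u'_iA_{ij}v'_j$ for every $i,j$. Positivity of $A_{ij}$ lets us cancel to obtain $u_i/u'_i=v'_j/v_j$; the left side is independent of $j$ and the right of $i$, so both equal a common constant $\lambda>0$, giving $u'=\lambda u$ and $v'=v/\lambda$.

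For the fixed-point characterization, observe that $\diag(u)A\diag(v)\in U(a,b)$ is literally the pair of equations $u_i(Av)_i=a_i$ and $v_j(A^Tu)_j=b_j$, i.e. $u=a./(Av)$ and $v=b./(A^Tu)$, which is the fixed-point equation of the Sinkhorn map. To show the iteration converges to this fixed point I would interpret each half-step as the exact KL projection of the current scaled matrix onto an affine constraint set (one set of marginals at a time) and invoke the Csisz\'ar alternating I-projection theorem; alternatively, one can appeal to Birkhoff's theorem that a linear map with strictly positive kernel $A$ is a strict contraction in Hilbert's projective metric on the positive cone modulo scaling, yielding a unique attracting orbit and geometric convergence. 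The main obstacle here is the convergence step: existence and uniqueness fall out essentially for free from the variational formulation, but a clean quantitative proof of Sinkhorn convergence really wants the Hilbert-metric contraction framework, and carefully setting up Birkhoff's coefficient for $A$ is the most technical part.
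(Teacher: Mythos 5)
The paper does not actually prove this lemma: it is imported verbatim from \citet{sinkhorn1967diagonal}, and the convergence of the iteration is delegated to the Hilbert projective metric contraction argument of \citet{franklin1989scaling}. Your proposal therefore supplies a self-contained argument where the paper relies on citation, and the route you take---existence and uniqueness via the $\KL$ (I-)projection of $A$ onto $U(a,b)$, with the diagonal scaling emerging as the stationarity condition---is the classical Csisz\'ar-style proof and is essentially correct. The uniqueness step is clean: any positive pair $(u,v)$ with $\diag(u)A\diag(v)\in U(a,b)$ gives a feasible point whose log-ratio to $A$ is of the form $\alpha\one_m^T+\one_n\beta^T$, hence satisfies the optimality conditions, and strict convexity plus cancellation of the positive $A_{ij}$ yields $u'=\lambda u$, $v'=v/\lambda$. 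The fixed-point claim is, as you say, nothing more than rewriting the two marginal equations $u\circ(Av)=a$ and $v\circ(A^Tu)=b$; this matches the update actually implemented in Algorithm~\ref{algo:sk}, and since the lemma only asserts that the scaling pair \emph{is} a fixed point, the quantitative convergence you identify as the hard part is not needed for the statement itself---the paper treats it exactly as you suggest, by invoking the Birkhoff--Hopf contraction via \citet{franklin1989scaling}.

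One detail you should not gloss over: you assert that the minimizer $P^\star$ of $\KL(\cdot\,\|A)$ over $U(a,b)$ is entrywise positive, and your Lagrangian/normal-cone step silently uses this (if some $P^\star_{ij}=0$, the stationarity relation $\log(P^\star_{ij}/A_{ij})=\alpha_i+\beta_j$ is meaningless and the nonnegativity constraints enter the KKT system). The standard fix is the boundary-derivative argument: since $a$, $b$ and $A$ are strictly positive, $ab^T$ is an interior feasible point, and the one-sided derivative of $t\mapsto \KL\bigl((1-t)P^\star+t\,ab^T\|A\bigr)$ at $t=0$ is $-\infty$ whenever $P^\star$ has a zero entry, contradicting optimality. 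Also note that coercivity is not needed since $U(a,b)$ is compact, and that the map displayed in the lemma is a reparameterization (in terms of elementwise inverses) of the fixed-point equations you derive; the content is the same.
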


The convergence of the algorithm is linear when using Hilbert's projective metric between the scaling factors \citep[\S3]{franklin1989scaling}. Although we use this algorithm in our experiments because of its simplicity, other algorithms exist \citep{knight2012fast} which are known to be more reliable numerically when $\lambda$ is large.

\begin{algorithm}[t]
        \begin{algorithmic}
	\caption{Smoothed Primal $T^\star_\lambda$ and Dual $\alpha_\lambda^\star$ Optima}\label{algo:sk}
          \STATE \textbf{Input} $M,\lambda,a,b$
		  \STATE $K=\exp(-\lambda M)$; 
		  \STATE $\widetilde{K}=\diag(a^{-1}) K$ \% use \texttt{bsxfun(@rdivide,$K$,$a$)}
		  \STATE Set $u=$\texttt{ones($n$,1)/$n$};
		  \WHILE{$u$ changes}
		  \STATE \texttt{$u=1./(\widetilde{K}(b./(K^Tu)))$}.
		  \ENDWHILE
		  \STATE $v=b./(K^Tu).$
		  \STATE  $\alpha^\star_\lambda=-\frac{1}{\lambda}\log(u)+\frac{\log(u)^T\one_n}{\lambda n}\one_n$. 
		  \STATE  $T^{\star}_\lambda=\diag(u)K\diag(v)$.
     	  \STATE \% use $\texttt{bsxfun}(\texttt{@times},v,(\texttt{bsxfun}(\texttt{@times},K,u))')$;
        \end{algorithmic}
     \end{algorithm}

\begin{figure*}[t]
  \includegraphics[width=\textwidth,height=5.5cm]{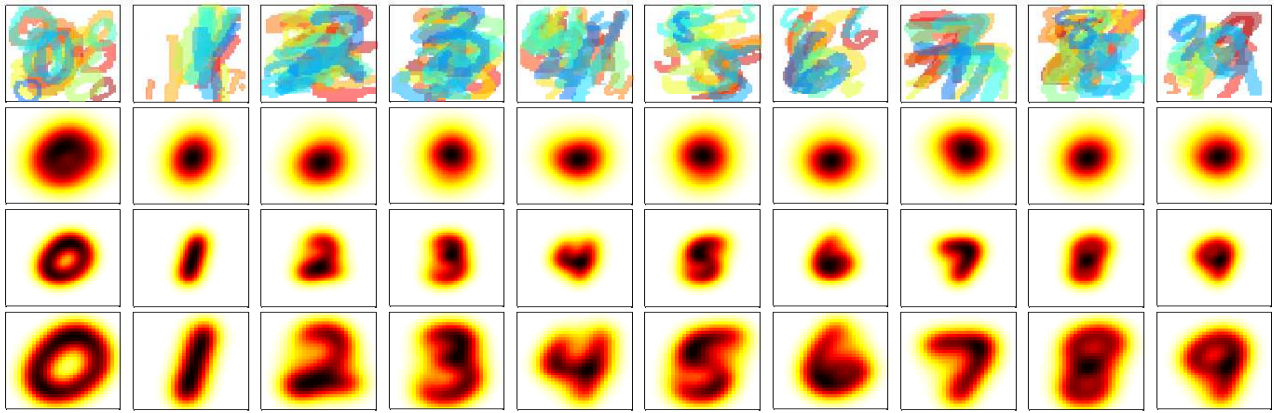}
  \caption{(top) For each digit, $15$ out of the $\approx 5.000$ scaled and translated images considered for each barycenter. (bottom) Barycenters after $t=1,10,60$ gradient steps. For $t=60$, images are cropped to show the $30\times30$ central pixels.}\label{fig:wass}
\end{figure*}

\textbf{Summary:} Given a smoothing parameter $\lambda>0$, using Sinkhorn's algorithm on matrix $K$, defined as the elementwise exponential of $-\lambda M$ (the pairwise Gaussian kernel matrix between the supports $X$ and $Y$ when $p=2$, using bandwidth $\sigma=1/\sqrt{2\lambda}$) we can recover smoothed optima $\alpha^\star_\lambda$ and $T^\star_\lambda$ for \emph{both} smoothed primal $\bp_\lambda$ and dual $\bd_\lambda$ transport problems. To take advantage of this, we simply propose to substitute the smoothed optima $\alpha^\star_\lambda$ and $T^\star_\lambda$ to the original optima $\alpha^\star$ and $T^\star$ that appear in Algorithms~\ref{algo:discwass} and \ref{algo:general}.

\section{Applications}\label{sec:exp}
We present two applications, one of Algorithm~\ref{algo:discwass} and one of Algorithm \ref{algo:general}, that both rely on the smooth approximations presented in \S\ref{sec:smooth}. The settings we consider involve computing respectively tens of thousands or tens of high-dimensional optimal transport problems---2.500$\times$2.500 for the first application, $57.647\times 48$ for the second---which cannot be realistically carried out using network flow solvers. Using network flow solvers, the resolution of a single transport problem of these dimensions could take between several minutes to several hours. We also take advantage in the first application of the fact that Algorithm~\ref{algo:sk} can be run efficiently on GPGPUs using vectorized code \citep[Alg.1]{cuturi2013sinkhorn}.

\subsection{Visualization of Perturbed Images}
We use $50.000$ images of the MNIST database, with approximately $5.000$ images for each digit from 0 to 9. Each image (originally $20\times 20$ pixels) is scaled randomly, uniformly between half-size and double-size, and translated randomly within a $50\times 50$ grid, with a bias towards corners. We display intermediate barycenter solutions for each of these 10 datasets of images for $t=1,10,60$ gradient iterations. $\lambda$ is set to $60/\text{median}(M)$, where $M$ is the squared-Euclidean distance matrix between all 2,500 pixels in the grid. Using a Quadro K5000 GPU with close to 1500 cores, the computation of a single barycenter takes about 2 hours to reach 100 iterations. Because we use warm starts to initialize $u$ in Algorithm~\ref{algo:sk} at each iteration of Algorithm~\ref{algo:discwass}, the first iterations are typically more computationally intensive than those carried out near the end.

\begin{figure}[h!]
  \hskip-1.2cm\includegraphics[width=11cm]{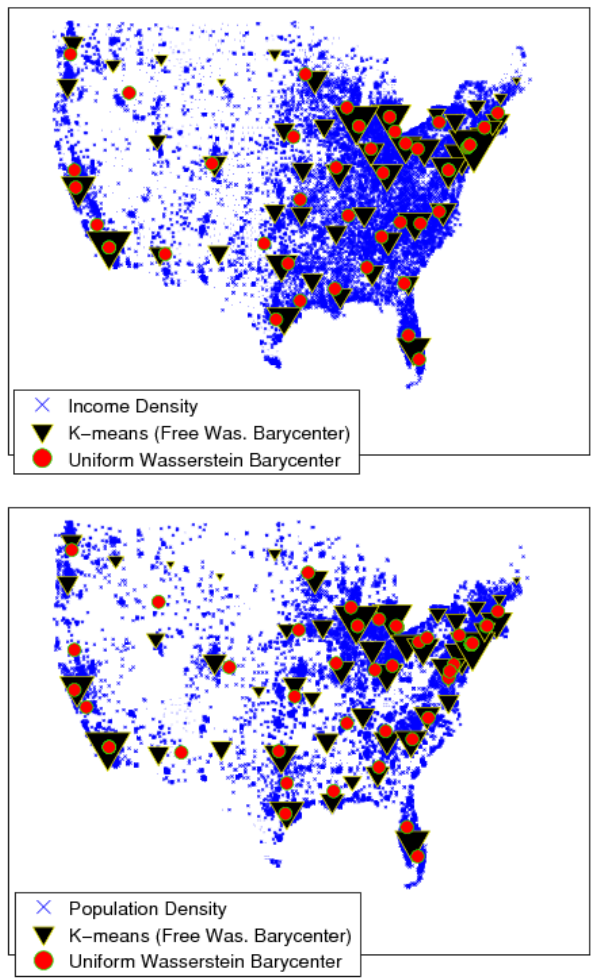}
  \caption{Comparison of two Wasserstein barycenters on spatial repartitions of income and population in the 48 contiguous states, using as many ($k=48$) centroids. The size of each of the 57.647 blue crosses is proportional to the local average of the relevant variable (income above and population below) at that location, normalized to sum to 1. Each downward triangle is a centroid of the $k$-means clustering (equivalent to a Wasserstein barycenter with $\Theta=\Sigma_k$) whose size is  proportional to the portion of mass captured by that centroid. Red dots indicate centroids obtained with a uniform constraint on the weights, $\Theta=\{\one_k/k\}$. Since such centroids are constrained to carry a fixed portion of the total weight, one can observe that they provide a more balanced clustering than the $k$-means solution.}\label{fig:america}
\end{figure}

\subsection{Clustering with Uniform Centroids}
In practice, the $k$-means cost function applied to a given empirical measure could be minimized with a set of centroids $X$ and weight vector $a$ such that the entropy of $a$ is very small. This can occur when most of the original points in the dataset are attributed to a very small subset of the $k$ centroids, and could be undesirable in applications of $k$-means where a more regular attribution is sought. For instance, in sensor deployment, when each centroid (sensor) is limited in the number of data points (users) it can serve, we would like to ensure that the attributions agree with those limits.

Whereas the original $k$-means cannot take into account such limits, we can ensure them using Algorithm~\ref{algo:general}.  We illustrate the difference between looking for optimal centroids with ``free'' assignments ($\Theta=\Sigma_k$), and looking for optimal ``uniform'' centroids with constrained assignments ($\Theta=\{\one_k/k\}$) using US census data for income and population repartitions across 57.647 spatial locations in the 48 contiguous states. These weighted points can be interpreted as two empirical measures on $\mathbb{R}^{2}$ with weights directly proportional to these respective quantities. We initialize both ``free'' and ``uniform'' clustering with the actual 48 state capitals. Results displayed in Figure~\ref{fig:america} show that by forcing our approximation to be uniform, we recover centroids that induce a more balanced clustering. Indeed, each cell of the Voronoi diagram built with these centroids is now constrained to hold the same aggregate wealth or population. These centroids could form the new state capitals of equally rich or equally populated states. On an algorithmic note, we notice in Figure~\ref{fig:graphe} that Algorithm~\ref{algo:general} converges to its (local) optimum at a speed which is directly comparable to that of the $k$-means in terms of iterations, with a relatively modest computational overhead. Unsurprisingly, the Wasserstein distance between the clusters and the original measure is higher when adding uniform constraints on the weights.

\begin{figure}[h!]
  \centering\includegraphics[width=7.5cm]{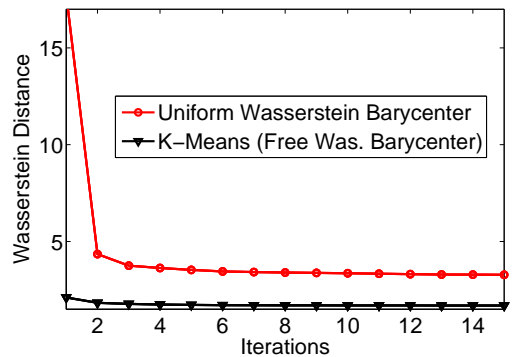}
  \caption{Wasserstein distance of the uniform Wasserstein barycenter (weights constrained to be in $\Theta=\{\one_k/k\}$) and its unconstrained equivalent ($k$-means) to the income empirical measure. Note that, because of the constraints on weights, the Wasserstein distance of the uniform Wasserstein barycenter is necessarily larger. On a single CPU core, these computations require 12.5 seconds for the constrained case, using Sinkhorn's approximation, and 1.55 seconds for the regular $k$-means algorithm. Using a regular transportation solver, computing the optimal transport from the 57.647 points to the 48 centroids would require about 1 hour for a single iteration}\label{fig:graphe}
\end{figure}

\paragraph{Conclusion} We have proposed in this paper two original algorithms to compute Wasserstein barycenters of empirical measures. Using these algorithms in practice for measures of large support is a daunting task for two reasons: they are inherently slow because they rely on the subgradient method; the computation of these subgradients involves solving optimal and dual optimal transport problems. Both issues can be substantially alleviated by smoothing the primal optimal transport problem with an entropic penalty and considering its dual. Both smoothed problems admit gradients which can be computed efficiently using only matrix vector products. Our aim in proposing such algorithms is to demonstrate that Wasserstein barycenters can be used for visualization, constrained clustering, and hopefully as a core component within more complex data analysis techniques in future applications. We also believe that our smoothing approach can be directly applied to more complex variational problems that involve multiple Wasserstein distances, such as Wasserstein propagation~\citep{solomon2014wasserstein}.

\paragraph{Acknowledgements}We thank reviewers for their comments and Gabriel Peyr\'e for fruitful discussions. MC was supported by grant 26700002 from JSPS. AD was partially supported by EPSRC.

 {\small{
\bibliographystyle{abbrvnat}

}}

\end{document}